\documentclass[10pt,twoside]{article}

\usepackage[english]{babel}
\usepackage[utf8]{inputenc}
\usepackage{amsmath}
\usepackage{amsfonts}
\usepackage{mathtools}
\usepackage{amssymb}
\usepackage{amsthm}
\usepackage{hyperref}
\usepackage{graphicx}
\usepackage{epstopdf}
\usepackage{todonotes}

%
\def\N			{\mathbb N}

\def\R			{\mathbb R}

\def\st{\, {\bf :} \,}
\def\pd{{\bf PD}}

\def\cH	        {\mathcal H}
\def\cL	        {\mathcal L}
\def\cR	        {\mathcal R}
\def\cS	        {\mathcal S}

\def\elltwo     {\ell_2(\mathbb{N})}

\newtheorem{theorem}{Theorem}[section]
\newtheorem{lemma}[theorem]{Lemma}
\newtheorem{corollary}[theorem]{Corollary}

\theoremstyle{definition}
\newtheorem{definition}[theorem]{Definition}

\begin{document}


\pagestyle{myheadings}


\title{Irregular Sampling of High-Dimensional Functions in Reproducing Kernel Hilbert Spaces}

\date{\today}

\author{\large
Armin Iske\footnote{Universit\"at Hamburg, Department of Mathematics, {\tt armin.iske@uni-hamburg.de}} \qquad
Lennart Ohlsen\footnote{Universit\"at Hamburg, Dept of Mathematics, {\tt lennart.ohlsen@studium.uni-hamburg.de}}}

\markboth{\footnotesize \rm \hfill A.~ISKE AND L.~OHLSEN\hfill}
{\footnotesize \rm \hfill IRREGULAR SAMPLING IN REPRODUCING KERNEL HILBERT SPACES\hfill}

\maketitle
\thispagestyle{plain}


\begin{abstract}
We develop sampling formulas for high-dimensional functions in reproducing kernel Hilbert spaces,
where we rely on irregular samples that are taken at {\em determining} sequences of data points. 
We place particular emphasis on sampling formulas for
tensor product kernels, where we show that determining irregular samples in lower dimensions
can be composed to obtain a tensor of determining irregular samples in higher dimensions.
This in turn reduces the computational complexity of sampling formulas for high-dimensional functions quite significantly.
\end{abstract}

\section{Introduction}
Reproducing kernel Hilbert spaces (RKHS) play an increa\-singly important role in mathematical data ana\-lysis.
The long history of reproducing kernels is dating back to the seminal work~\cite{Aronszajn1950} of Aronszajn (in 1950), 
followed by a wide range of fundamental contributions from approximation theory~\cite{Micchelli2006,Zhou2003},
computational harmonic analysis~\cite{Bartolucci2023},
learning theory~\cite{Cucker2007,Schoelkopf2002}, support vector machines~\cite{Steinwart2008}, 
numerical analysis~\cite{Schaback2006}, probability, measure theory, and statistics~\cite{Berlinet2004}.

More recently, 
kernel regression for high-dimensional data fitting through dimensionality reduction has gained enormous popu\-larity
in statistical data analysis~\cite{Fukumizu2009}, machine lear\-ning~\cite{EIT2023} and manifold learning~\cite{Guillemard2017}.
Other relevant applications include particle-based fluid flow simulations,
requiring adaptive selections of sample points~\cite{AEI2023,AI2025,BIP2001}.
In that case, the construction and ana\-lysis of sampling formulas in RKHS are of vital importance
to approximate multivariate functions. 

Sampling theorems have a long history, too, dating back to the celebrated work~\cite{Shannon1949} of Shannon (in 1949).
Shannon's sampling theorem represents band-limited signals on uniform grids, i.e., from regular samples.
For a comprehensive account to the history of Shannon's theorem, we refer to~\cite{Unser2000}.

\medskip
In this work, we analyse sampling formulas of the form
\begin{equation}
\label{eq:st}
        f=\sum_{k\in\N} f(x_k) L_k,
\end{equation}
for functions $f$ in Hilbert spaces $\cH_K$ generated by a reproducing kernel $K$,
sampled at irregular high-dimensional points $X=(x_k)_{k \in \N}$, and for specific 
sequences $(L_k)_{k \in \N}$ in $\cH_K$.

\medskip
To be more precise, we introduce the notion of {\em determining} sequences of sampling points $X$
that are essentially linked with the reproducing kernel $K$ of the RKHS $\cH_K$. This then leads to
a closed linear subspace $\cH_{K,X}$ of $\cH_K$, in which all functions $f \in \cH_{K,X}$ can be represented
by a sampling formula as in~\eqref{eq:st}, and where $(L_k)_{k \in \N}$ is a Riesz basis of $\cH_{K,X}$.
For a general discussion on determining sets for functions, cf.~\cite{Zalcman1985}.
 
\medskip
Our approach is related to the work~\cite{Nashed1991} by Nashed and Walter, 
who explored sampling expressions in RKHS that are closed subspaces of specific Sobolev spaces.

\medskip
The outline of this paper is as follows.
We first explain key features on RKHS in Section~\ref{sec:rkhs}, 
before we recall only a few standard results from Riesz theory 
for the finite-dimensional case (in Section~\ref{sec:riesz}).
On these grounds, we develop sampling formulas 
for positive definite kernels $K$ (in Section~\ref{sec:sampling}),
that can be expressed through the solution of an infinite linear equation system 
in the sequence space $\ell^2$. 
Finally, we extend our results to tensor products of positive definite kernels 
(in Section~\ref{sec:tp:sampling}). The latter is highly relevant for efficient
tensor product approximation in high-dimensional data analysis. 

\section{Reproducing Kernel Hilbert Spaces}
\label{sec:rkhs}
Kernels are popular tools for multivariate approximation.
To explain basic features of kernel-based approximation,
we introduce positive definite functions and their associated
reproducing kernel Hilbert spaces
(cf.~\cite{Buhmann2003,Iske2018,Wendland2005} for details).

\begin{definition}
\label{bas:def:pdf}
A continuous and symmetric function $K : \R^d \times \R^d \longrightarrow \R$ 
is said to be a {\em positive definite kernel} on $\R^d$, $K \in \pd (\R^d)$, 
if for any finite set of pairwise distinct points $X = \{x_1,\ldots,x_n\} \subset \R^d$, 
$n \in \N$, the matrix
\begin{equation*}
    A_{K,X} = (K(x_k,x_j))_{1 \leq j,k \leq n} \in \R^{n \times n}
\end{equation*}
is symmetric and positive definite.
\end{definition}

Positive definite kernels $K \in \pd (\R^d)$ are often required to be {\em translation invariant}, 
i.e.,  $K$ is assumed to have the form
\begin{equation*}
    K(x,y) = \Phi(x - y) 
    \qquad \mbox{ for } x,y \in \R^d
\end{equation*}
for an even function $\Phi : \R^d \longrightarrow \R$.
Popular examples for translation invariant kernels $K \in \pd(\R^d)$ include the {\em Gaussian} 
$
  K(x,y) = \Phi(x-y) = \exp(- \| x-y \|_2^2),
$
and the {\em inverse multiquadric}
$
  K(x,y) = \Phi(x-y) = ( 1 + \| x - y \|_2^2 )^{-1/2}.
$

\medskip
Now let us explain a few key features of kernel-based approximation. 
In the following discussion, it will be convenient to let $K_x : \R^d \longrightarrow \R$, for $x \in \R^d$,  
be defined as
$$
    K_x(y) := K(x,y)
    \qquad \mbox{ for } x,y \in \R^d.
$$
Then, according to the seminal work of Aronszajn~\cite{Aronszajn1950},
the {\em reproducing kernel Hilbert space} (RKHS) $\cH_K$ associated with $K \in \pd(\R^d)$ is the closure
$$
    \cH_K := \overline{ {\rm span} \left\{ K_x \st x \in \R^d \right\} }
$$
with respect to the inner product $(\cdot,\cdot)_K \equiv (\cdot,\cdot)_{\cH_K}$ satisfying
$$
    (K_x,K_y)_K = K(x,y)
    \qquad \mbox{ for all } x,y \in \R^d,
$$
whereby we have
\begin{equation*}
    \left\| \sum_{j=1}^n c_j K_{x_j} \right\|_K^2 = 
    \sum_{j,k=1}^n c_j K(x_j,x_k) c_k,
    =
    c^T A_{K,X} c,
\end{equation*}
for all $X = \{x_1,\ldots,x_n\} \subset \R^d$ and $c=(c_1,\ldots,c_n)^\top \in \R^n$.

\medskip
Therefore, $K \in \pd(\R^d)$ is the unique reproducing kernel of $\cH_K$, as characterized 
by the {\em reproduction property}
\begin{equation}
\label{eq:repro}
  f(x) = (f , K_x)_K
  \qquad \mbox{ for all } f \in \cH_K \mbox{ and all } x \in \R^d.
\end{equation}

\medskip
In this work, we provide sampling formulas for functions $f \in \cH_K$.
To this end, we fix a countable set of sampling points $X = (x_k)_{k \in \N}$ in $\R^d$,
which yields the closed subset
$$
    \cH_{K,X} := \overline{ {\rm span} \left\{ K_x \st x \in X \right\} } \subset \cH_K
$$
of the RKHS $\cH_K$.
On given samples $(f(x_k))_{k \in \N}$ taken from $f \in \cH_{K,X}$, 
we wish to prove sampling formulas of the form~\eqref{eq:st}
with specific basis functions $L_k \in \cH_{K,X}$, for $k \in \N$.
We remark at this point that for the representation in~\eqref{eq:st} 
to hold for any $f \in \cH_{K,X}$, we require 
conditions on the sampling points $X = (x_k)_{k \in \N}$. 
We will detail on this later in Section~\ref{sec:sampling}.

\section{Riesz Bases, their Duals, and Riesz Stability}
\label{sec:riesz}
Before we prove sampling formulas for functions $f \in \cH_{K,X}$ on {\em infinite} sets of sample points $X = (x_k)_{k \in \N}$,
let us first discuss the case of {\em finitely} many samples. 
To this end, we first recall a few standard results from Riesz theory, which we adapt to reproducing kernel Hilbert spaces.
This is partly for the purpose of preparation, and partly for the sake of readability.

\medskip
To introduce {\em Riesz bases} for finite-dimensional linear subspaces of the RKHS $\cH_{K}$,
let us fix a {\em finite} set of pairwise distinct sampling points $X = \{x_1,\ldots,x_n\}$.
This yields a finite set $\cR_X := \{ K_{x_j} \, | \, 1 \leq j \leq n \}$ of linearly independent functions, 
whose span $\cS_X := {\rm span} \{ K_{x_j} \, | \, 1 \leq j \leq n \}$ is a finite-dimensional linear subspace
of $\cH_{K}$.
Recall that the functions in $\cR_X$ are a {\em Riesz basis} of $\cS_X$.

\medskip
In fact, due to the Courant-Fischer theorem from linear algebra, the {\em Riesz stability estimate}
\begin{equation*}
  \lambda_{A_{K,X}} \| c \|_2^2 \leq 
  \left\| \sum_{j=1}^n c_j K(\cdot,x_j) \right\|_K^2 \leq
  \Lambda_{A_{K,X}} \| c \|_2^2
\end{equation*}
holds for all $c = (c_1,\ldots,c_n)^T \in \R^n$, and the {\em Riesz constants} are given by the 
smallest eigenvalue $\lambda_{A_{K,X}}$ and the largest eigenvalue $\Lambda_{A_{K,X}}$ 
of the positive definite kernel matrix $A_{K,X}$.

\medskip
Now we introduce the (unique) dual Riesz basis $\tilde{\cR}_X$ of $\cR_X$,
where it is straight forward to identify the Lagrange basis of $\cS_X$ as dual to $\cR_X$.
We can explain this as follows: 
First recall that the Lagrange basis $\cL_X = \{\ell_1,\ldots,\ell_n\}$ of $\cS_X$ 
can be characterized by $\ell_j(x_k) = \delta_{jk}$ for $1 \leq j,k \leq n$.
Moreover, the Lagrange basis functions in $\cL_X$ are uniquely determined by the solution of the linear system 
\begin{equation}
\label{pdf:lag:equ:lag}
  A_{K,X} \cdot \ell(x) = R_X(x)
  \qquad \mbox{ for } x \in \R^d,
\end{equation}
where
\begin{eqnarray*}
    \ell(x) &=& (\ell_1(x),\ldots,\ell_n(x))^T \in \R^n
\\ 
    R_X(x) &=& (K_{x_1}(x),\ldots,K_{x_n}(x))^T \in \R^n.
\end{eqnarray*}

Hence, the (unique) interpolant $s \in \cS_X$ to $f$ on $X$ has the {\em Lagrange representation}
\begin{equation*}
  s(x) = \langle f_X , \ell(x) \rangle,
\end{equation*}
where we let $f_X = (f(x_1),\ldots,f(x_n))^T \in \R^n$, and where
$\langle \cdot, \cdot \rangle$ denotes the inner product on the Euclidean space $\R^n$.

\medskip
We can summarize our discussion as follows.

\begin{theorem}
\label{pdf:sta:rie:drb}
For a finite point set $X = \{x_1,\ldots,x_n\}$, $n \in \N$,
the unique dual Riesz basis of $\cR_X = \{ K_{x_j} \, | \, x_j \in X \}$
is given by the Lagrange basis of $\cS_X$, i.e., $\tilde{\cR}_X = \cL_X$.
In particular,
\begin{itemize}
\item[(a)]
the orthonormality relation
\begin{equation}
\label{pdf:sta:rie:bio}
  ( K_{x_j}, \ell_k )_K = \delta_{jk}
\end{equation}
holds for all $1 \leq j,k \leq n$. 
\item[(b)]
for any $f_X \in \R^n$, we have the stability estimate
\begin{equation}
\label{pdf:sta:equ:sas1}
  \Lambda^{-1}_{A_{K,X}} \| f_X \|_2^2 \leq 
  \left\| \sum_{j=1}^n f(x_j) \ell_j \right\|_K^2 \leq 
  \lambda^{-1}_{A_{K,X}} \| f_X \|_2^2.
\end{equation}
\item[(c)]
for any $s \in \cS_X$, we have the stability estimate
\begin{equation}
\label{pdf:sta:equ:sas2}
  \lambda_{A_{K,X}} \| s \|_K^2 \leq \| s_X \|_2^2 \leq \Lambda_{A_{K,X}} \| s \|_K^2.
\end{equation}
\end{itemize}
\end{theorem}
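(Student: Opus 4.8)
The plan is to prove the three assertions in sequence, obtaining (a) directly from the reproduction property, and then reducing (b) and (c) to eigenvalue estimates for $A_{K,X}$ via the Courant--Fischer theorem. The duality claim $\tilde{\cR}_X = \cL_X$ then follows from (a) together with the standard fact from Riesz theory that, for two bases of one finite-dimensional space, the biorthogonality relations uniquely characterize the dual basis.

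For (a), I would apply the reproduction property~\eqref{eq:repro} to each Lagrange function: since $\ell_k \in \cS_X \subset \cH_K$, we have $(K_{x_j}, \ell_k)_K = (\ell_k, K_{x_j})_K = \ell_k(x_j) = \delta_{jk}$, where the last equality is the defining property of the Lagrange basis. This establishes the biorthogonality~\eqref{pdf:sta:rie:bio} and hence identifies $\cL_X$ as the dual Riesz basis of $\cR_X$.

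The linchpin for both (b) and (c) is the Gram matrix of the Lagrange basis. Solving the linear system~\eqref{pdf:lag:equ:lag} gives $\ell_j = \sum_m (A_{K,X}^{-1})_{jm} K_{x_m}$, and inserting this into $(\ell_j, \ell_k)_K$ while using $(K_{x_m}, K_{x_p})_K = K(x_m, x_p) = (A_{K,X})_{mp}$ collapses the double sum to $(A_{K,X}^{-1})_{jk}$. Thus the Gram matrix of $\cL_X$ equals $A_{K,X}^{-1}$, so that $\| \sum_j f(x_j) \ell_j \|_K^2 = f_X^\top A_{K,X}^{-1} f_X$. Since the extremal eigenvalues of $A_{K,X}^{-1}$ are $\Lambda_{A_{K,X}}^{-1}$ and $\lambda_{A_{K,X}}^{-1}$, the Courant--Fischer bounds on the Rayleigh quotient yield~\eqref{pdf:sta:equ:sas1} immediately.

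For (c), I would write $s = \sum_j c_j K_{x_j}$, so that the displayed norm formula gives $\|s\|_K^2 = c^\top A_{K,X} c$, while $s(x_k) = \sum_j c_j K(x_j,x_k)$ shows $s_X = A_{K,X} c$ and hence $\|s_X\|_2^2 = c^\top A_{K,X}^2 c$. Diagonalizing $A_{K,X} = Q D Q^\top$ and setting $v = Q^\top c$ reduces the desired inequalities to $\lambda_{A_{K,X}} \sum_i \mu_i v_i^2 \leq \sum_i \mu_i^2 v_i^2 \leq \Lambda_{A_{K,X}} \sum_i \mu_i v_i^2$, which is immediate from $\lambda_{A_{K,X}} \leq \mu_i \leq \Lambda_{A_{K,X}}$ for every eigenvalue $\mu_i$. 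The main obstacle is merely bookkeeping: one must keep the roles of $A_{K,X}$ and $A_{K,X}^{-1}$ and of the two extremal eigenvalues straight, since the constants in (b) and (c) are reciprocally related. This reciprocity is no accident, and indeed~\eqref{pdf:sta:equ:sas2} can alternatively be deduced from~\eqref{pdf:sta:equ:sas1} by observing that the synthesis map $f_X \mapsto \sum_j f(x_j)\ell_j$ and the sampling map $s \mapsto s_X$ are mutually inverse, so that inverting the two-sided operator bounds of (b) produces exactly the bounds of (c).
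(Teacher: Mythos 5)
Your proposal is correct, and for the duality claim, part (a), and part (b) it follows essentially the same route as the paper: biorthogonality via the reproduction property, identification of $\cL_X$ as the dual basis by the uniqueness of biorthogonal sequences, and the identity $\bigl\| \sum_j f(x_j)\ell_j \bigr\|_K^2 = f_X^\top A_{K,X}^{-1} f_X$ combined with Courant--Fischer. In fact you supply a detail the paper leaves implicit, namely the computation showing that the Gram matrix of the Lagrange basis equals $A_{K,X}^{-1}$, obtained by substituting $\ell_j = \sum_m (A_{K,X}^{-1})_{jm} K_{x_m}$ from~\eqref{pdf:lag:equ:lag}. Where you genuinely diverge is part (c): your primary argument is a self-contained spectral computation, writing $\|s\|_K^2 = c^\top A_{K,X} c$ and $\|s_X\|_2^2 = c^\top A_{K,X}^2 c$ and comparing them after diagonalization. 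The paper instead gets (c) for free from (b) by substituting $f = s \in \cS_X$, using that interpolation reproduces elements of $\cS_X$ (so $s = \sum_j s(x_j)\ell_j$), and then inverting the two-sided bounds --- precisely the ``mutually inverse maps'' observation you relegate to your closing remark. The paper's route is shorter and makes the reciprocity of the constants in (b) and (c) structurally transparent; your direct computation is independent of (b) and would survive even if one only wanted (c), at the modest cost of introducing the eigendecomposition. Both are valid; either could serve as the proof.
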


\begin{proof}
The orthonormality relation in~(\ref{pdf:sta:rie:bio}) follows 
immediately from the reproduction property in~\eqref{eq:repro}, whereby
$$
  (K_{x_j}, \ell_k)_K = \ell_k(x_j) = \delta_{jk}
  \qquad \mbox{ for all } 1 \leq j,k \leq n.
$$
Due to the fundamental duality property from Riesz theory~\cite{Christensen2016},
the Lagrange basis $\cL_X$ is the unique dual Riesz basis of $\cR_X$.
Moreover, for any $f_X \in \R^n$, we have the representation
$$
  \left\| \sum_{j=1}^n f(x_j) \ell_j \right\|_K^2 = 
  \| f_X \|^2_{A_{K,X}^{-1}} = f^T_X A^{-1}_{K,X} f_X,
$$
and by the Courant-Fischer theorem, the {\em Rayleigh estimates}
$$
  \lambda_{A^{-1}_{K,X}} \| f_X \|_2^2 
  \leq f^T_X A^{-1}_{K,X} f_X \leq
  \Lambda_{A^{-1}_{K,X}} \| f_X \|_2^2
$$
hold.
This implies the stability estimate in~(\ref{pdf:sta:equ:sas1}), where
$$
   \Lambda^{-1}_{A_{K,X}} = \lambda_{A^{-1}_{K,X}}
   \quad \mbox{ and } \quad
   \lambda^{-1}_{A_{K,X}} = \Lambda_{A^{-1}_{K,X}}.
$$   
Letting $f = s \in \cS_X$ in~(\ref{pdf:sta:equ:sas1}), we finally get
$$
  \Lambda^{-1}_{A_{K,X}} \| s_X \|_2^2 \leq 
  \| s \|_K^2 = \left\| \sum_{j=1}^n s(x_j) \ell_j \right\|_K^2 \leq
  \lambda^{-1}_{A_{K,X}} \| s_X \|_2^2
$$
for all $s \in \cS_X$, so that the stability estimate in~(\ref{pdf:sta:equ:sas2}) holds.
\end{proof}

Let us recall another important result
from the Riesz duality relation between 
$\cR_X = \{ K_{x_j} \, | \, x_j \in X \}$ 
and
$\tilde{\cR}_X = \cL_X$.

\begin{corollary}
\label{pdf:sta:cor:dar}
For any $f \in \cS_X$, we have the representations
\begin{equation}
\label{pdf:sta:equ:dar}
  f = \sum_{j=1}^n (f , K_{x_j})_K \, \ell_j
    = \sum_{j=1}^n (f , \ell_j)_K \, K_{x_j}.
\end{equation}
\qed
\end{corollary}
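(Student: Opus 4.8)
The plan is to derive both representations in parallel, each as a direct consequence of the biorthogonality relation~\eqref{pdf:sta:rie:bio}, exploiting that both $\cR_X = \{K_{x_j}\}$ and its dual $\cL_X = \{\ell_j\}$ are bases of the finite-dimensional space $\cS_X$. The whole statement is nothing more than the expansion of a vector with respect to a pair of biorthogonal bases, so the argument reduces to computing coefficients by pairing against the appropriate dual elements.

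For the first identity, I would expand $f \in \cS_X$ in the Lagrange basis as $f = \sum_{k=1}^n d_k\,\ell_k$. Taking the inner product with $K_{x_j}$ and applying~\eqref{pdf:sta:rie:bio} in the form $(K_{x_j},\ell_k)_K = \delta_{jk}$ collapses the sum to a single term, giving $d_j = (f,K_{x_j})_K$ and hence $f = \sum_{j=1}^n (f,K_{x_j})_K\,\ell_j$. For the second identity, I would proceed symmetrically: expand $f = \sum_{j=1}^n c_j\,K_{x_j}$ in the Riesz basis $\cR_X$, take the inner product with $\ell_k$, and again invoke~\eqref{pdf:sta:rie:bio} to obtain $c_k = (f,\ell_k)_K$, which yields $f = \sum_{j=1}^n (f,\ell_j)_K\,K_{x_j}$.

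There is no genuine obstacle here; the only point requiring any care is simply to keep track of which basis one expands in and which dual element one pairs against, so that biorthogonality collapses the correct sum. As a consistency check, the reproduction property~\eqref{eq:repro} gives $(f,K_{x_j})_K = f(x_j)$, so that the first identity recovers the familiar Lagrange interpolation formula $f = \sum_{j=1}^n f(x_j)\,\ell_j$, in agreement with the Lagrange representation recorded just before Theorem~\ref{pdf:sta:rie:drb}.
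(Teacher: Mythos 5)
Your proof is correct, and it cleanly establishes both identities: once one knows that $\cR_X$ and $\cL_X$ are bases of the finite-dimensional space $\cS_X$, pairing each expansion against the other family and invoking the biorthogonality relation~\eqref{pdf:sta:rie:bio} identifies the coefficients, which is exactly the content of the corollary. The paper's treatment is slightly different in mechanics. Officially, the corollary is stated without a separate proof, as an immediate instance of the general duality expansion for dual Riesz bases established in Theorem~\ref{pdf:sta:rie:drb} (via the reference to Christensen); your argument is precisely the finite-dimensional proof of that general fact, so in this sense you have made explicit what the paper leaves to the cited theory. The paper then also offers a ``more direct'' verification, but with different ingredients: it starts from the Lagrange interpolation representation $f = \sum_{j=1}^n f(x_j)\,\ell_j$ and uses the reproduction property~\eqref{eq:repro} to rewrite $f(x_j) = (f,K_{x_j})_K$, which gives the first identity; for the second, it identifies the kernel-basis coefficients by the matrix computation $c_j = e_j^T A_{K,X}^{-1} f_X = (f,\ell_j)_K$ in~\eqref{one:hand}, coming from the linear system~\eqref{pdf:lag:equ:lag}. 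Your route avoids the kernel matrix inverse entirely and treats the two expansions symmetrically, which is arguably more elegant here; the paper's coefficient computation, on the other hand, is the one that carries over to the infinite-dimensional setting, where $c = A_{K,X}^{-1} f_X$ reappears in the discussion preceding Theorem~\ref{theo:samplingexpressionf}.
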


We can also verify the representations in~(\ref{pdf:sta:equ:dar}) more directly.
Indeed, for any $f \in \cS_X$, we have the two representations 
\begin{equation*}
    f = \sum_{j=1}^n f(x_j) \ell_j = \sum_{j=1}^n c_j K_{x_j}, 
\end{equation*}
one with respect to the Lagrange basis $\cL_X$, the other with respect to the kernel basis $\cR_X$.
Now, on the one hand, we get
\begin{equation}
\label{one:hand}
    c_j = \langle e_j , c \rangle = e_j^T A_{K,X}^{-1} f_X = f_X^T A_{K,X}^{-1} e_j = (f , \ell_j)_K
\end{equation}
from~\eqref{pdf:lag:equ:lag}.
On the other hand, we have $(f , K_{x_j})_K = f(x_j)$
by the reproduction property~\eqref{eq:repro} of $K \in \pd(\R^d)$, 
for $1 \leq j \leq n$.

\section{Sampling Formulas for RKHS}
\label{sec:sampling}
In this section, we develop sampling formulas of the form~\eqref{eq:st}
for {\em infinite} sets of sampling points $X = (x_n)_{n \in \N}$.
To this end, we require determining sequences of sampling points in $X = (x_n)_{n \in \N}$, 
according to the following definition.

\smallskip
\begin{definition}
\label{def:suitable}
Let $K \in \pd(\R^d)$ be a positive definite kernel.
Then, a countable sequence $X = ( x_n )_{n\in\N}$ 
of pairwise distinct sampling points in $\R^d$
said to be {\em determining} in the RKHS $\cH_K$,
if the infinite kernel matrix 
\begin{equation}
\label{inf:kernel:matrix}
    A_{K,X} = \left( K(x_j,x_k) \right)_{j,k=1}^\infty
\end{equation}
is bijective and bounded on $\ell_2(\mathbb{N})$, i.e.,
$A_{K,X} \in L (\ell_2(\mathbb{N}),\ell_2(\mathbb{N}))$.
\end{definition}

We remark that the property for $X = ( x_n )_{n\in\N}$ to be a determining sequence in $\cH_K$, 
according to Definition~\ref{def:suitable},
is equivalent for the sequence $(K_{x_n})_{n \in \N}$ to be a Riesz basis in $\cH_{K,X}$.
We can reformulate this standard result from Riesz theory as follows.

\begin{lemma}
\label{lemma:charact}
For $K \in \pd(\R^d)$, let $X = ( x_n )_{n \in \N}$ be a determining sequence in $\cH_K$.
Then, any $f \in \cH_{K,X}$ is characterised by the representation
$$
    f = \sum_{k \in \N} c_k K_{x_k}
$$
for some coefficient vector $c = (c_k)_{k \in \N} \in \elltwo$. 
\end{lemma}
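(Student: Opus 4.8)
The plan is to show that the synthesis operator
$T : \elltwo \longrightarrow \cH_{K,X}$, given on finitely supported sequences by $Tc := \sum_{k \in \N} c_k K_{x_k}$ and extended by continuity, is a linear isomorphism onto $\cH_{K,X}$. The asserted representation is then exactly the surjectivity of $T$ (existence of a coefficient vector $c \in \elltwo$ with $f = Tc$), while injectivity yields uniqueness as a free bonus. The whole argument rests on a two-sided Riesz estimate for the family $(K_{x_k})_{k \in \N}$, which I would extract from the hypothesis that the infinite Gram matrix $A_{K,X}$ is bounded and bijective on $\elltwo$.

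First I would establish the Riesz bounds. Since $K$ is symmetric, $A_{K,X}$ is self-adjoint; since it is the Gram matrix of the $K_{x_k}$, it is positive; and since it is bounded and bijective, the bounded inverse theorem gives $A_{K,X}^{-1} \in L(\elltwo,\elltwo)$. A bounded, self-adjoint, positive operator with bounded inverse has spectrum contained in an interval $[\lambda,\Lambda]$ with $0 < \lambda = \|A_{K,X}^{-1}\|^{-1} \le \Lambda = \|A_{K,X}\|$, so that
\[
  \lambda \, \|c\|_2^2 \;\le\; (A_{K,X} c, c)_2 \;\le\; \Lambda \, \|c\|_2^2
  \qquad \mbox{for all } c \in \elltwo .
\]

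Next I would use the Gram identity $(A_{K,X} c, c)_2 = \| \sum_{k} c_k K_{x_k} \|_K^2$, valid for finitely supported $c$ thanks to the relation $(K_{x_j}, K_{x_k})_K = K(x_j,x_k)$, to transport these bounds to the functions themselves. The upper bound shows that the partial sums of $\sum_{k} c_k K_{x_k}$ form a Cauchy sequence in $\cH_K$ for every $c \in \elltwo$, since $\| \sum_{k=m+1}^{n} c_k K_{x_k} \|_K^2 \le \Lambda \sum_{k=m+1}^{n} c_k^2$; hence $Tc$ is well defined, and by continuity the two-sided estimate $\lambda \|c\|_2^2 \le \|Tc\|_K^2 \le \Lambda \|c\|_2^2$ holds for all $c \in \elltwo$. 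The lower bound shows that $T$ is bounded below, so $T$ is injective with closed range. Finally, the range of $T$ is a closed subspace containing every $K_{x_k}$ (take $c$ to be the $k$-th unit vector), hence containing ${\rm span}\{K_{x_k} \st k \in \N\}$ and therefore its closure $\cH_{K,X}$; conversely $Tc \in \cH_{K,X}$ always holds, being a limit of elements of that span. Thus the range of $T$ equals $\cH_{K,X}$, and every $f \in \cH_{K,X}$ is of the form $\sum_{k \in \N} c_k K_{x_k}$ for a unique $c \in \elltwo$.

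I expect the main obstacle to be the lower Riesz bound $\lambda > 0$, which is the one step that genuinely uses the \emph{surjectivity} of $A_{K,X}$: boundedness and injectivity alone would only give $(A_{K,X} c, c)_2 > 0$ pointwise, not bounded away from zero, and this would fail to make $T$ bounded below. It is precisely the bounded inverse theorem, applied to the bijective operator $A_{K,X}$, that upgrades positivity to the uniform bound $\|A_{K,X}^{-1}\|^{-1} > 0$, and hence to the lower Riesz bound. The remaining convergence bookkeeping for the infinite series is then routine once both bounds are in hand.
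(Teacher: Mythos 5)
Your proof is correct. The difference from the paper is one of self-containedness rather than of mathematical substance: the paper's proof is a one-line citation of the standard Gram-matrix characterisation of Riesz bases (Theorem~3.6.6 in Christensen's book), observing that the Gram matrix $G_{K,X}$ of the family $(K_{x_n})_{n\in\N}$ coincides with $A_{K,X}$, which is bounded and bijective by the definition of a determining sequence; the Riesz basis property then yields the $\ell_2$-coefficient representation. You instead reprove exactly the implication being cited: from boundedness, self-adjointness, positivity and bijectivity of $A_{K,X}$ you extract the two-sided spectral bounds $\lambda = \|A_{K,X}^{-1}\|^{-1}$ and $\Lambda = \|A_{K,X}\|$, transport them through the Gram identity to the synthesis operator $T$, and conclude that $T$ is an isomorphism of $\elltwo$ onto $\cH_{K,X}$ (boundedness below giving closed range, the unit vectors giving density of the range in $\cH_{K,X}$). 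What your route buys is transparency and extra information: explicit Riesz constants, uniqueness of the coefficient sequence, and a clear identification of where bijectivity of $A_{K,X}$ (as opposed to mere injectivity) is indispensable --- namely for the lower Riesz bound, which is precisely the point the paper's definition of \emph{determining} is engineered to supply. What the paper's route buys is brevity and alignment with the frame-theory literature it builds on throughout. Your argument would serve well as an appendix-style expansion of the paper's citation; as written, both are valid proofs of the lemma.
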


\begin{proof}
We follow along the lines of~\cite[Theorem 3.6.6]{Christensen2016}:
$(K_{x_n})_{n \in \N}$ is a Riesz basis of $\cH_{K,X}$, 
since the corresponding Gram matrix $G_{K,X} := \left( ( K_{x_j}, K_{x_k} )_K \right)_{j,k \in \N}$ 
contains the entries $K(x_j,x_k) = ( K_{x_j},K_{x_k} )_K$, i.e., $G_{K,X} = A_{K,X}$.
\end{proof}

The result of Lemma~\ref{lemma:charact} is our starting point for
the development of sampling formulas in the RKHS $\cH_{K,X}$.
To this end, we will show that the sampling functions $L_k \in \cH_{K,X}$ 
in~\eqref{eq:st} are those from the dual Riesz basis of $(K_{x_n})_{n \in \N}$. 

\medskip
The following result shows that there is a one-to-one relation between 
the elements in $\cH_{K,X}$ (viewed as equivalence classes of Cauchy sequences)
and the pointwise representation of functions $f$ in ${\rm span} \{ K_x \, | \, x \in X \}$
with coefficients in $\elltwo$.

\begin{lemma}
\label{lemma:auchtoll}
For $K \in \pd(\R^d)$, let $X = ( x_n )_{n \in \N}$ be a determining sequence in $\cH_K$.
Moreover, for $c = (c_k)_{k \in \N} \in \elltwo$, let $f : \R^d \longrightarrow \R$ be defined as
$$
    f(x) = \sum_{k \in \N} c_k K_{x_k}(x)
    \qquad \mbox{ for } x \in \R^d.
$$
Then, there is one unique $\tilde{f} \in \cH_{K,X}$ satisfying
$$
     ( \tilde{f}, K_x )_K = f(x)
     \qquad \mbox{ for all } x\in\R^d.
$$  
\end{lemma}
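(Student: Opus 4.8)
The plan is to produce $\tilde f$ explicitly as the Hilbert-space sum of the very series that defines $f$ pointwise, and then to check that its values under the reproduction property~\eqref{eq:repro} recover $f$. First I would invoke Lemma~\ref{lemma:charact}: since $X$ is determining, $(K_{x_n})_{n\in\N}$ is a Riesz basis of $\cH_{K,X}$, so the synthesis map $c \mapsto \sum_{k\in\N} c_k K_{x_k}$ is a bounded bijection from $\elltwo$ onto $\cH_{K,X}$. In particular, for the given $c \in \elltwo$ the series $\sum_{k\in\N} c_k K_{x_k}$ converges in the norm of $\cH_K$ to an element
$$
    \tilde f := \sum_{k\in\N} c_k K_{x_k} \in \cH_{K,X},
$$
which settles existence of a candidate, provided we can verify the stated identity.

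To verify $(\tilde f, K_x)_K = f(x)$, I would fix $x \in \R^d$ and exploit that point evaluation $g \mapsto (g, K_x)_K$ is a continuous linear functional on $\cH_K$ (bounded by $\|K_x\|_K = \sqrt{K(x,x)}$, via~\eqref{eq:repro} and Cauchy--Schwarz). Applying it to the partial sums $S_N := \sum_{k=1}^N c_k K_{x_k}$, which converge to $\tilde f$ in $\cH_K$, and using $(K_{x_k}, K_x)_K = K(x_k,x) = K_{x_k}(x)$, gives
$$
    (\tilde f, K_x)_K = \lim_{N\to\infty} (S_N, K_x)_K = \lim_{N\to\infty} \sum_{k=1}^N c_k K_{x_k}(x) = \sum_{k\in\N} c_k K_{x_k}(x) = f(x).
$$
As a byproduct this shows that the defining series for $f$ converges pointwise and that its sum is the pointwise value of the Hilbert-space element $\tilde f$. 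This identification of the pointwise-defined $f$ with an element of $\cH_{K,X}$ is the crux of the argument, and it is exactly the continuity of point evaluation that licenses the interchange of the limit with the inner product; I expect this passage from norm convergence in $\cH_{K,X}$ to pointwise statements to be the only step requiring genuine care, everything else being bookkeeping with the Riesz basis structure supplied by Lemma~\ref{lemma:charact}.

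For uniqueness, suppose $\tilde g \in \cH_{K,X}$ also satisfies $(\tilde g, K_x)_K = f(x)$ for every $x \in \R^d$. Then $(\tilde f - \tilde g, K_x)_K = 0$ for all $x \in \R^d$, so $\tilde f - \tilde g$ is orthogonal to $\mathrm{span}\{K_x \st x \in \R^d\}$, which is dense in $\cH_K$ by the definition of the RKHS. Hence $\tilde f - \tilde g = 0$, and uniqueness holds (already within $\cH_K$, and a fortiori within $\cH_{K,X}$). The argument is thus driven entirely by the Riesz basis property from Lemma~\ref{lemma:charact} together with the reproduction property~\eqref{eq:repro}, with no further obstacle anticipated.
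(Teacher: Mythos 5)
Your proof is correct and follows essentially the same route as the paper's: define $\tilde f$ as the $\cH_{K,X}$-convergent series $\sum_{k\in\N} c_k K_{x_k}$, then pass the limit of the partial sums through the inner product with $K_x$ using the reproduction property~\eqref{eq:repro}. You in fact go slightly beyond the paper's own proof by explicitly justifying the norm convergence of the series via the Riesz basis property from Lemma~\ref{lemma:charact} and by supplying the uniqueness argument (orthogonality to the dense span of $\{K_x \st x\in\R^d\}$), both of which the paper leaves implicit.
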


\begin{proof}
Let $\tilde{f} \in \cH_{K,X}$ be defined as
$$
     \tilde{f} := \sum_{k \in \N} c_k K_{x_k}.
$$
Moreover, for $n \in \N$, let $\tilde{f_n}$ be the $n$-th partial sum of $\tilde{f}$, i.e.,
$$ 
    \tilde{f_n} = \sum_{k=1}^n c_k K_{x_k} \in \cH_{K,X}.
$$ 
Then, we obtain
\begin{eqnarray*}
( \tilde{f},K_x )_K
 &=& \lim_{n \to \infty} ( \tilde{f_n} , K_x )_K 
=
    \lim_{n \to \infty} \left( \sum_{k=1}^n c_k K_{x_k} , K_x \right)_K 
    \\ &=& 
    \lim_{n \to \infty} \sum_{k=1}^n c_k ( K_{x_k} , K_x )_K 
=
    \lim_{n \to \infty} \sum_{k=1}^n c_k K_{x_k}(x)
    \\ & = & f(x)
\end{eqnarray*}
by the reproduction property $K_{x_k}(x) = ( K_{x_k} , K_x )_K$.
\end{proof}

Now we are in a position, where we can formulate one sampling formula in $\cH_{K,X}$.
To this end, we use the invertibility of the infinite kernel matrix $A_{K,X}$ in~\eqref{inf:kernel:matrix}.
Note that for any 
$$
    f = \sum_{k \in \N} c_k K_{x_k} \in \cH_{K,X},
$$
the function values of $f_X = (f(x_n))_{n \in \N}$ at the sampling points $X = (x_n)_{n \in \N}$ 
determine $f$ uniquely, since 
$f_X= (f(x_n))_{n \in \N} = A_{K,X} c$, whereby $c = A_{K,X}^{-1}f_X$,
like in the finite dimensional case~\eqref{one:hand}.

\begin{theorem}
\label{theo:samplingexpressionf}
For $K \in \pd(\R^d)$, let $X = ( x_n )_{n \in \N}$ be a determining sequence in $\cH_K$.
Then, any $f \in \cH_{K,X}$ can be expressed by the sampling formula
\begin{equation}
\label{eq:samplingexpressionf}
        f = \sum_{k \in \N} f(x_k) L_k,
\end{equation}
where
\begin{equation*}
        L_k = \sum_{n \in \N} (A_{K,X}^{-1}e_k)_n K_{x_n} \in \cH_{K,X}
\qquad \mbox{ for } k \in \N,
\end{equation*}
and where $e_k$ is the $k$-th unit vector in $\elltwo$.
\end{theorem}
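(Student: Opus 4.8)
The plan is to realise the sampling formula as the image, under a single bounded operator, of the elementary $\elltwo$-identity $c = A_{K,X}^{-1} f_X$ that already appeared, in finite dimensions, in~\eqref{one:hand}. First I would fix $f \in \cH_{K,X}$ and invoke Lemma~\ref{lemma:charact} to write $f = \sum_{k\in\N} c_k K_{x_k}$ with $c = (c_k)_{k\in\N} \in \elltwo$. Introducing the synthesis operator $T : \elltwo \longrightarrow \cH_{K,X}$, $Tc := \sum_{k\in\N} c_k K_{x_k}$, which is bounded precisely because $(K_{x_n})_{n\in\N}$ is a Riesz basis of $\cH_{K,X}$, we have $f = Tc$. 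Since $A_{K,X}$ is bounded on $\elltwo$, the sample vector $f_X = A_{K,X} c$ lies in $\elltwo$; and since $A_{K,X}$ is moreover bijective, the bounded inverse theorem makes $A_{K,X}^{-1}$ bounded, so that $c = A_{K,X}^{-1} f_X$.

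Next I would check that each $L_k$ is a well-defined element of $\cH_{K,X}$: as $A_{K,X}^{-1} e_k \in \elltwo$, the defining series converges and $L_k = T(A_{K,X}^{-1} e_k)$. The computation then reduces to moving $T$ and $A_{K,X}^{-1}$ through the summation,
$$ \sum_{k\in\N} f(x_k)\, L_k = \sum_{k\in\N} f(x_k)\, T(A_{K,X}^{-1} e_k) = T\!\left( A_{K,X}^{-1} \sum_{k\in\N} f(x_k)\, e_k \right) = T(A_{K,X}^{-1} f_X) = Tc = f, $$
where $\sum_{k\in\N} f(x_k)\, e_k = f_X$ converges in $\elltwo$ (as $f_X \in \elltwo$), and the two interchanges are licensed by the continuity of $T$ and the boundedness of $A_{K,X}^{-1}$.

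A complementary route, more in the spirit of the duality language of Section~\ref{sec:riesz}, is to verify directly that $(L_k)_{k\in\N}$ is the dual Riesz basis of $(K_{x_n})_{n\in\N}$ by establishing the biorthogonality relation $(K_{x_j}, L_k)_K = \delta_{jk}$. Expanding the inner product against the convergent series for $L_k$ and using $(K_{x_j}, K_{x_n})_K = K(x_j,x_n) = (A_{K,X})_{jn}$ collapses the sum, via the matrix–vector action of the bounded operator $A_{K,X}$, to $(A_{K,X} A_{K,X}^{-1} e_k)_j = \delta_{jk}$; the sampling formula~\eqref{eq:samplingexpressionf} is then the dual-basis expansion $f = \sum_{k\in\N} (f, K_{x_k})_K L_k$ combined with the reproduction property $(f, K_{x_k})_K = f(x_k)$ from~\eqref{eq:repro}.

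The only genuine difficulty is that, unlike the finite-dimensional computation~\eqref{one:hand}, none of the rearrangements of infinite sums is automatic; each must be justified by continuity. The hypothesis that $X$ be \emph{determining}, i.e.\ that $A_{K,X}$ be bounded and boundedly invertible on $\elltwo$, is exactly what supplies the bounded operators $T$, $A_{K,X}$ and $A_{K,X}^{-1}$ that make every interchange legitimate, so I expect the main care to go into stating these boundedness facts cleanly rather than into any hard estimate.
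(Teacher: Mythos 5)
Your proposal is correct, and its primary route is mechanically different from the paper's, although both rest on the same underlying algebra: $c = A_{K,X}^{-1} f_X$ and $L_k$ synthesized from the coefficient vector $A_{K,X}^{-1} e_k$. The paper argues \emph{pointwise}: it writes $f(x) = \langle c, R_X(x)\rangle_{\ell_2}$ with $R_X(x) = (K_{x_n}(x))_{n\in\N} \in \elltwo$, uses the self-adjointness of $A_{K,X}^{-1}$ to shift the inverse across the inner product, $f(x) = \langle f_X, A_{K,X}^{-1}R_X(x)\rangle_{\ell_2}$, reads off $L_k(x) = \langle R_X(x), A_{K,X}^{-1}e_k\rangle_{\ell_2}$, and finishes with the biorthogonality computation $(L_k, K_{x_j})_K = L_k(x_j) = \delta_{jk}$ --- which is precisely your ``complementary route,'' so that part of your proposal coincides with the paper's argument. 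Your first route instead factors the whole identity through the synthesis operator: with $L_k = T(A_{K,X}^{-1}e_k)$, pushing the $\ell_2$-convergent sum $\sum_{k\in\N} f(x_k)e_k = f_X$ through the bounded operator $T A_{K,X}^{-1}$ gives $\sum_{k\in\N} f(x_k)L_k = T(A_{K,X}^{-1}f_X) = Tc = f$. This buys two things the paper leaves implicit: it avoids self-adjointness of $A_{K,X}^{-1}$ altogether (only boundedness and bijectivity are used), and it delivers convergence of the sampling series in the $\cH_K$-norm, whereas the paper's displayed computation establishes the identity only pointwise, with norm convergence recovered via the dual-Riesz-basis machinery made explicit in Corollary~\ref{col:rieszbases}. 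The one step you assert rather than derive is $f_X = A_{K,X}c$; its entrywise verification $f(x_j) = (f, K_{x_j})_K = \sum_{k\in\N} c_k K(x_k, x_j)$ is a one-line continuity argument (cf.\ Lemma~\ref{lemma:auchtoll}), and the paper records exactly this identity in the paragraph preceding the theorem, so this is at most a cosmetic omission.
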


\begin{proof}
We can express $f \in \cH_{K,X}$ pointwise by
$$
    f(x)= \sum_{k \in \N} c_k K_{x_k}(x) = \langle c, R_X(x) \rangle_{\ell_2},
$$
for specific coefficients $c = (c_k)_{k \in \N} \in \ell_2$, and where we let 
$$
    R_X(x) := (K_{x_n}(x))_{n \in \N} \in \ell_2,
$$
cf.~\eqref{pdf:lag:equ:lag}.
Therefore, we have
$$
    f(x) = \langle A_{K,X}^{-1} f_X, R_X(x) \rangle_{\ell_2} = \langle f_X, A_{K,X}^{-1} R_X(x) \rangle_{\ell_2},
$$
since $A_{K,X}^{-1}$ is self-adjoint. 
Now we let $L(x):=A_{K,X}^{-1} R_X(x)$, and so
$$
    L_k(x)=\langle L(x),e_k \rangle_{\ell_2} = \langle R_X(x) , A_{K,X}^{-1} e_k \rangle_{\ell_2}.
$$
By Lemma~\ref{lemma:charact}, we have $L_k \in \cH_{K,X}$ for all $k \in \N$. 

\medskip
Finally, $(K_{x_j})_{j \in \N}$ and $(L_k)_{k \in \N}$ are biortho\-gonal by
\begin{eqnarray*}
  ( L_k , K_{x_j} )_K &=& L_k (x_j) = \langle L(x_j) , e_k \rangle_{\ell^2}
  \\&=&
 \langle A_{K,X}^{-1} R_X(x_j) , e_k \rangle_{\ell^2} 
 = \langle e_j , e_k \rangle_{\ell^2} 
 \\ &=& 
 \delta_{jk}.
\end{eqnarray*}
\end{proof}

\begin{corollary}
\label{col:rieszbases}
For $K \in \pd(\R^d)$, let $X = ( x_n )_{n \in \N}$ be a determining sequence in $\cH_K$.
Then, $(K_{x_j})_{j \in \N}$ and $(L_k)_{k \in \N}$ are dual Riesz bases in $\cH_{K,X}$.
\end{corollary}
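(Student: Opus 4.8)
The plan is to combine the two facts already in hand---that $(K_{x_j})_{j \in \N}$ is a Riesz basis of $\cH_{K,X}$ by Lemma~\ref{lemma:charact}, and that $(L_k)_{k \in \N}$ is a biorthogonal system for it inside $\cH_{K,X}$ by Theorem~\ref{theo:samplingexpressionf}---and then to invoke the uniqueness of the dual Riesz basis. First I would recall the relevant standard result from Riesz theory (cf.~\cite{Christensen2016}): every Riesz basis $(f_j)_{j \in \N}$ of a Hilbert space $H$ possesses a unique dual Riesz basis $(\tilde{f}_j)_{j \in \N} \subset H$, which is itself a Riesz basis of $H$ and which is characterised by the biorthogonality relation $(f_j, \tilde{f}_k)_H = \delta_{jk}$ together with membership in $H$.

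Next I would check these hypotheses in our setting, taking $H = \cH_{K,X}$ and $f_j = K_{x_j}$. By Lemma~\ref{lemma:charact} the sequence $(K_{x_j})_{j \in \N}$ is a Riesz basis of $\cH_{K,X}$, so its unique dual Riesz basis exists and is again a Riesz basis of $\cH_{K,X}$. By Theorem~\ref{theo:samplingexpressionf} each $L_k$ lies in $\cH_{K,X}$ and satisfies $(L_k, K_{x_j})_K = \delta_{jk}$ for all $j,k \in \N$; in other words, $(L_k)_{k \in \N}$ is a biorthogonal system for $(K_{x_j})_{j \in \N}$ that is contained in $\cH_{K,X}$.

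Finally I would argue that the biorthogonal system inside $\cH_{K,X}$ is unique, so that $(L_k)_{k \in \N}$ must coincide with the dual Riesz basis. Uniqueness follows from completeness: since $(K_{x_j})_{j \in \N}$ is a Riesz basis, it is complete in $\cH_{K,X}$, i.e.\ $\overline{{\rm span} \{ K_{x_j} \st j \in \N \}} = \cH_{K,X}$. Hence, if $g_k, g_k' \in \cH_{K,X}$ both satisfy $(K_{x_j}, g_k)_K = \delta_{jk} = (K_{x_j}, g_k')_K$ for every $j$, then $g_k - g_k'$ is orthogonal to each $K_{x_j}$ and therefore to all of $\cH_{K,X}$, which forces $g_k = g_k'$. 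Consequently $(L_k)_{k \in \N}$ equals the dual Riesz basis of $(K_{x_j})_{j \in \N}$; in particular it is itself a Riesz basis of $\cH_{K,X}$, and since dual Riesz bases are mutually dual, this is exactly the assertion of the corollary.

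The main obstacle is the identification step rather than any estimate: biorthogonality by itself does not determine the dual in a general (non-Riesz) situation, and one genuinely needs both that the $L_k$ lie in the closed span $\cH_{K,X}$ and that $(K_{x_j})_{j \in \N}$ is complete there. Once the completeness supplied by Lemma~\ref{lemma:charact} is available, the orthogonality argument above closes the gap with no further computation, and the membership $L_k \in \cH_{K,X}$ from Theorem~\ref{theo:samplingexpressionf} is what prevents the dual from escaping the space in which the uniqueness holds.
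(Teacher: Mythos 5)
Your proof is correct, but it takes a different route from the paper's. You identify $(L_k)_{k \in \N}$ with the dual Riesz basis abstractly: existence and uniqueness of the dual from general Riesz theory, plus the observation that a sequence in $\cH_{K,X}$ biorthogonal to the complete sequence $(K_{x_j})_{j \in \N}$ is unique (your orthogonality/completeness argument), so the biorthogonal system $(L_k)_{k \in \N}$ must \emph{be} that dual. The paper instead argues computationally: it verifies the two expansions
\begin{equation*}
    f = \sum_{k \in \N} ( f , L_k )_K K_{x_k} = \sum_{k \in \N} ( f , K_{x_k} )_K L_k
\end{equation*}
directly and computes the Gram matrix of the $L_k$'s, namely $\left( ( L_j, L_k )_K \right)_{j,k \in \N} = A_{K,X}^{-1}$, which is bounded and invertible on $\elltwo$, so $(L_k)_{k \in \N}$ is itself a Riesz basis by the same criterion used for $(K_{x_j})_{j \in \N}$. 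Your approach is cleaner and avoids any computation, getting the Riesz basis property of $(L_k)_{k \in \N}$ for free from the duality theorem; the paper's approach yields concrete extra information, namely the explicit Gram matrix $A_{K,X}^{-1}$ of the dual basis and hence explicit Riesz constants from the spectrum of $A_{K,X}$. One small point of bookkeeping: the biorthogonality relation $(L_k, K_{x_j})_K = \delta_{jk}$ and the membership $L_k \in \cH_{K,X}$ are established in the \emph{proof} of Theorem~\ref{theo:samplingexpressionf} rather than in its statement, so you should either cite that computation explicitly or re-derive it in one line from $(L_k, K_{x_j})_K = L_k(x_j) = \langle A_{K,X}^{-1} R_X(x_j), e_k \rangle_{\ell_2} = \langle e_j, e_k \rangle_{\ell_2}$.
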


\begin{proof}
The functions $(K_{x_j})_{j \in \N}$ are a Riesz basis. 
This is because the Gram matrix $G_{K,X} := \left( ( K_{x_j}, K_{x_k} )_K \right)_{j,k \in \N}$  
is bounded and invertible on $\elltwo$. 
By straight forward calculations we get
\begin{equation*}
    f = \sum_{k \in \N} ( f , L_k )_K K_{x_k} =\sum_{k \in \N} ( f , K_{x_k} )_K L_k,
\end{equation*}
and, moreover, $\left( ( L_j ,L_k )_K \right)_{j,k \in \N} = A_{K,X}^{-1}$, i.e.,
$(L_k)_{k \in \N}$ is the dual Riesz basis of $(K_{x_j})_{j \in \N}$ in $\cH_{K,X}$.
\end{proof}

\section{Sampling Formulas for Tensor Product RKHS}
\label{sec:tp:sampling}
Now we extend the sampling formula~\eqref{eq:samplingexpressionf} of Theorem~\ref{theo:samplingexpressionf} to 
{\em tensor products} of RKHS (TP-RKHS), where 
we rely on previous findings on tensor products of frames~\cite{Bourouihiya2008} and on tensor product kernels~\cite{AEI2025}.

\medskip
Let us first give a definition for products of Hilbert spaces.

\begin{definition}
\label{defi:tp:hilbert}
The {\em Hilbert tensor product} of two real Hilbert spaces $\mathcal{H}_1,\mathcal{H}_2$ 
is the unique (up to isomorphisms) Hilbert space $\mathcal{H}:=\mathcal{H}_1\bigotimes \mathcal{H}_2$ 
for which there exists a bilinear mapping $\varphi:\mathcal{H}_1\times\mathcal{H}_2 \longrightarrow \mathcal{H}$
satisfying
\begin{equation}
\label{eq:eigenschaftbilinear}
( \varphi(f_1,f_2),\varphi(g_1,g_2) )_{\mathcal{H}}= ( f_1,g_1)_{\mathcal{H}_1} \cdot ( f_2,g_2 )_{\mathcal{H}_2},
\end{equation}
for all $(f_1,f_2),(g_1,g_2)\in\mathcal{H}_1\times\mathcal{H}_2$.
\end{definition}

In the special case of two reproducing kernel Hilbert spaces $\cH_{K_1}$ and $\cH_{K_2}$ 
from kernels $K_1 \in \pd(\R^{d_1})$ and $K_2 \in \pd(\R^{d_2})$,
their Hilbert tensor product $\cH_{K_1} \bigotimes \cH_{K_2}$ is given by the 
RKHS generated by $K : \R^{d_1+d_2} \longrightarrow \R$, where
\begin{equation}
\label{eq:tp:kernel}
    K(\mathbf{x},\mathbf{y})=K_1(x_1,x_2) \cdot K_2(y_1,y_2)
\end{equation}
for $\mathbf{x} = (x_1,y_1)$ and $\mathbf{y} = (x_2,y_2) \in \R^{d_1+d_2}$.

\medskip
A quick way to see that $\cH_{K_1} \bigotimes \cH_{K_2}$ must necessarily be generated by the
tensor product kernel $K$ in~\eqref{eq:tp:kernel}, 
is to let $\varphi(f,g)(x) := f(x)g(x)$ (in Definition~\ref{defi:tp:hilbert}),
whereby
\begin{eqnarray*}
   ( K(\cdot,\mathbf{x}), K(\cdot, \mathbf{y}) )_K 
   & = &
    K( \mathbf{x}, \mathbf{y}) =K_1(x_1,x_2) \cdot K_2(y_1,y_2)
    \\ &=& 
    ( K_1(\cdot,x_1),K_1(\cdot,x_2) )_{K_1} \cdot ( K_2(\cdot,y_1), K_2(\cdot,y_2) )_{K_2}.
\end{eqnarray*}
Therefore, \eqref{eq:eigenschaftbilinear} holds for all basis elements $K( \cdot, \mathbf{x} )$ 
of the RKHS $\cH_K$, and so it can be extended to all elements in $\cH_K$. 

\medskip
As recently proven in~\cite{AEI2025}, the tensor product $K$ between two positive definite 
kernels $K_1 \in \pd(\R^{d_1})$ and $K_2 \in \pd(\R^{d_2})$,
as in~\eqref{eq:tp:kernel}, is positive definite, i.e., $K \in \pd(\R^{d_1+d_2})$.

\medskip
Moreover, for Hilbert tensor products $\mathcal{H}=\mathcal{H}_1\bigotimes\mathcal{H}_2$,
their Riesz bases are given by the tensor products between the Riesz bases of the Hilbert spaces $\mathcal{H}_1$ and $\mathcal{H}_2$,
which is due to~\cite{Bourouihiya2008}.

\begin{lemma}
\label{lemma:rieszbasistensorproduct}
Let $\mathcal{H}_1,\mathcal{H}_2$ be Hilbert spaces. 
If $(f_{1,j})_{j \in \N}$ is a Riesz basis for $\mathcal{H}_1$, 
and $(f_{2,k})_{k \in \N}$ is a Riesz basis for $\mathcal{H}_2$,
then $(\varphi(f_{1,j},f_{2,k}))_{(j,k) \in \N \times\N}$ is a
Riesz basis for $\mathcal{H}_1\bigotimes\mathcal{H}_2$.
\qed
\end{lemma}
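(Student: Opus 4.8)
The plan is to prove the statement through the \emph{operator characterisation} of Riesz bases: a sequence $(g_n)_{n\in\N}$ is a Riesz basis of a Hilbert space $\mathcal{H}$ if and only if there exist an orthonormal basis $(e_n)_{n\in\N}$ of $\mathcal{H}$ and a bounded, boundedly invertible operator $T \in L(\mathcal{H},\mathcal{H})$ with $g_n = T e_n$ for all $n$ (cf.~\cite{Christensen2016}). Applying this to the two given Riesz bases, I would first fix orthonormal bases $(e_{1,j})_{j\in\N}$ of $\mathcal{H}_1$ and $(e_{2,k})_{k\in\N}$ of $\mathcal{H}_2$, together with bounded, boundedly invertible operators $T_1 \in L(\mathcal{H}_1,\mathcal{H}_1)$ and $T_2 \in L(\mathcal{H}_2,\mathcal{H}_2)$ such that $f_{1,j} = T_1 e_{1,j}$ and $f_{2,k} = T_2 e_{2,k}$.

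Next I would establish that the tensor family $(\varphi(e_{1,j},e_{2,k}))_{(j,k)\in\N\times\N}$ is an orthonormal basis of $\mathcal{H} = \mathcal{H}_1 \bigotimes \mathcal{H}_2$. Orthonormality is immediate from the defining property~\eqref{eq:eigenschaftbilinear}, which gives $(\varphi(e_{1,j},e_{2,k}),\varphi(e_{1,j'},e_{2,k'}))_{\mathcal{H}} = \delta_{jj'}\,\delta_{kk'}$. Completeness is precisely the statement that the Hilbert tensor product of Definition~\ref{defi:tp:hilbert} is generated, as a closed space, by the elementary tensors $\varphi(g_1,g_2)$; this is part of the standard construction of $\mathcal{H}_1\bigotimes\mathcal{H}_2$, which I would invoke here.

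The core of the argument is to build the tensor product operator $T := T_1\otimes T_2 \in L(\mathcal{H},\mathcal{H})$ acting by $T\varphi(g_1,g_2) = \varphi(T_1 g_1, T_2 g_2)$. Since $(g_1,g_2)\mapsto \varphi(T_1 g_1, T_2 g_2)$ is bilinear, it extends uniquely to a linear map on the (dense) span of elementary tensors. To obtain $\|T\|\leq \|T_1\|\,\|T_2\|$ I would factor $T = (T_1\otimes I)(I\otimes T_2)$ and estimate each factor slotwise: writing an arbitrary $u \in \mathcal{H}$ as $u = \sum_k \varphi(\eta_k, e_{2,k})$ with $\eta_k \in \mathcal{H}_1$ and $\|u\|_{\mathcal{H}}^2 = \sum_k \|\eta_k\|_{\mathcal{H}_1}^2$ (the expansion along the second factor), one has $(T_1\otimes I)u = \sum_k \varphi(T_1\eta_k, e_{2,k})$ with pairwise orthogonal summands, whence $\|(T_1\otimes I)u\|^2 = \sum_k \|T_1\eta_k\|^2 \leq \|T_1\|^2\,\|u\|^2$, and a symmetric slicing along the first factor bounds $I\otimes T_2$. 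Applying the same construction to $T_1^{-1}$ and $T_2^{-1}$ produces the bounded inverse $T^{-1} = T_1^{-1}\otimes T_2^{-1}$, so $T$ is boundedly invertible. Finally $\varphi(f_{1,j},f_{2,k}) = \varphi(T_1 e_{1,j}, T_2 e_{2,k}) = T\varphi(e_{1,j},e_{2,k})$, so the family in question is the image of an orthonormal basis of $\mathcal{H}$ under a bounded, boundedly invertible operator, hence a Riesz basis of $\mathcal{H}$ by the characterisation invoked at the outset.

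I expect the main obstacle to be exactly the well-definedness, boundedness, and bounded invertibility of $T_1\otimes T_2$ on the abstract Hilbert tensor product, in particular the norm identity $\|T_1\otimes T_2\| = \|T_1\|\,\|T_2\|$, which underlies both the Riesz upper bound and, through the inverse, the lower bound. Everything else (orthonormality, the factorisation $\varphi(f_{1,j},f_{2,k}) = T\varphi(e_{1,j},e_{2,k})$, and the closing appeal to the operator characterisation) is routine once this operator is in hand.
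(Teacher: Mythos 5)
Your proposal is correct, but there is nothing in the paper to compare it against: the paper states Lemma~\ref{lemma:rieszbasistensorproduct} without any proof, attributing it to the reference \cite{Bourouihiya2008} on tensor products of frames (which proves a more general statement, that tensor products of frames are frames, with the Riesz-basis case as a consequence). So you have supplied a self-contained argument where the paper relies on a citation, and your route --- the operator characterisation of Riesz bases as images of orthonormal bases under bounded bijections, the tensor orthonormal basis, and the construction of $T_1 \otimes T_2$ --- is the standard one and sound. Two points in your write-up deserve attention. First, completeness of the family $(\varphi(e_{1,j},e_{2,k}))_{(j,k)\in\N\times\N}$ needs the elementary tensors $\varphi(g_1,g_2)$ to span a dense subspace of $\mathcal{H}_1\bigotimes\mathcal{H}_2$; the paper's Definition~\ref{defi:tp:hilbert} does not actually state this density requirement (it is needed even for the uniqueness assertion made there), so you are right to import it explicitly from the standard construction rather than from \eqref{eq:eigenschaftbilinear} alone. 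Second, your sentence that bilinearity of $(g_1,g_2)\mapsto\varphi(T_1g_1,T_2g_2)$ ``extends uniquely to a linear map on the span'' is not, by itself, a proof of well-definedness, since one must check that linear relations among elementary tensors are respected; however, your subsequent slicing argument does settle this, because the slices $\eta_k$ in $u=\sum_k\varphi(\eta_k,e_{2,k})$ are uniquely determined by $u$ (via $(\eta_k,h)_{\mathcal{H}_1}=(u,\varphi(h,e_{2,k}))_{\mathcal{H}}$ for all $h\in\mathcal{H}_1$), so $T_1\otimes I$ is well defined and bounded on all of $\mathcal{H}$, and symmetrically for $I\otimes T_2$; composing, and repeating for $T_1^{-1}$, $T_2^{-1}$, gives the bounded bijection $T$ with $\varphi(f_{1,j},f_{2,k})=T\varphi(e_{1,j},e_{2,k})$. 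Read in that order (tensor orthonormal basis first, then slicing), your proof is complete; what it buys over the paper's approach is self-containedness, at the cost of about a page of standard Hilbert-space bookkeeping that the citation avoids.
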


\newpage

Now we can extend our results to tensor products of RKHS.

\begin{theorem}
For $K_1 \in \pd(\R^{d_1})$, let $X = ( x_n )_{n \in \N}$ be a determining sequence in $\cH_{K_1}$,
and for $K_2 \in \pd(\R^{d_2})$, let $Y = ( y_m )_{m \in \N}$ be a determining sequence in $\cH_{K_2}$.
Then, the tensor product kernel
$$
    K(\mathbf{x},\mathbf{y}) := K_1(x_1,y_1)\cdot K_2(x_2,y_2)
    \qquad \mbox{ for } \mathbf{x}, \mathbf{y} \in \R^{d_1+d_2}
$$
is positive definite on $\R^{d_1+d_2}$, i.e., $K \in \pd(\R^{d_1+d_2})$
and $X\times Y$ is a determining sequence in $\cH_K$.
Moreover,
$$
    (K_1(\cdot,x_j)\cdot K_2(\cdot,y_k))_{j,k\in\N^2}
    \quad \mbox{ and } \quad
    (L_{1,j}\cdot L_{2,k})_{j,k\in\N^2}
$$
are dual Riesz bases in $\cH_{K, X \times Y}$.
\end{theorem}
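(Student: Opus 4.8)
The plan is to reduce every assertion to the one-dimensional sampling theory already developed, exploiting that all relevant objects factorize across the tensor product. The positive definiteness $K \in \pd(\R^{d_1+d_2})$ is precisely the result cited from~\cite{AEI2025}, so nothing new is required there. For the two remaining claims I would work throughout with the identification $\cH_K = \cH_{K_1} \bigotimes \cH_{K_2}$ established above, under which the pointwise product satisfies $K_1(\cdot,x_j)\cdot K_2(\cdot,y_k) = \varphi(K_1(\cdot,x_j),K_2(\cdot,y_k)) = K_{(x_j,y_k)}$, and likewise $L_{1,j}\cdot L_{2,k} = \varphi(L_{1,j},L_{2,k})$, where $L_{1,j}$ and $L_{2,k}$ denote the sampling functions obtained by applying Theorem~\ref{theo:samplingexpressionf} to the pairs $(K_1,X)$ and $(K_2,Y)$ separately.

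To show that $X \times Y$ is determining, I would first observe that the infinite kernel matrix is a Kronecker product. Indexing the sampling points by $(j,k)\in\N\times\N$, its entries are
$$
K\bigl((x_j,y_k),(x_{j'},y_{k'})\bigr) = K_1(x_j,x_{j'})\cdot K_2(y_k,y_{k'}),
$$
so that, under the canonical isometry $\elltwo(\N\times\N)\cong\elltwo\otimes\elltwo$, one has $A_{K,X\times Y} = A_{K_1,X}\otimes A_{K_2,Y}$. Since $X$ and $Y$ are determining, both factors lie in $L(\elltwo,\elltwo)$ and are boundedly invertible by Definition~\ref{def:suitable}. The decisive step is then the standard operator-theoretic fact that the tensor product of two bounded invertible operators is again bounded and bijective, with $\|A\otimes B\| = \|A\|\,\|B\|$ and $(A\otimes B)^{-1} = A^{-1}\otimes B^{-1}$; applying this yields $A_{K,X\times Y}\in L(\elltwo,\elltwo)$ bijective, which is exactly what Definition~\ref{def:suitable} demands for $X\times Y$.

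For the dual Riesz basis claim I would assemble the one-dimensional results. By Corollary~\ref{col:rieszbases}, the pair $(K_1(\cdot,x_j))_{j\in\N}$, $(L_{1,j})_{j\in\N}$ are dual Riesz bases of $\cH_{K_1,X}$, and $(K_2(\cdot,y_k))_{k\in\N}$, $(L_{2,k})_{k\in\N}$ are dual Riesz bases of $\cH_{K_2,Y}$. Lemma~\ref{lemma:rieszbasistensorproduct} then turns each pair of factor bases into a Riesz basis of the tensor product space, so that both $(\varphi(K_1(\cdot,x_j),K_2(\cdot,y_k)))_{(j,k)\in\N\times\N}$ and $(\varphi(L_{1,j},L_{2,k}))_{(j,k)\in\N\times\N}$ are Riesz bases of $\cH_{K_1,X}\bigotimes\cH_{K_2,Y}$. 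It remains to check that these two Riesz bases are mutually dual, which by Riesz theory reduces to biorthogonality; using the defining property~\eqref{eq:eigenschaftbilinear} of the Hilbert tensor product together with the one-dimensional biorthogonality from Theorem~\ref{theo:samplingexpressionf},
$$
\bigl(\varphi(K_1(\cdot,x_j),K_2(\cdot,y_k)),\varphi(L_{1,j'},L_{2,k'})\bigr)_K = (K_1(\cdot,x_j),L_{1,j'})_{K_1}\,(K_2(\cdot,y_k),L_{2,k'})_{K_2} = \delta_{jj'}\,\delta_{kk'}.
$$

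The final point I would nail down is that the ambient space is the correct one, namely that $\cH_{K,X\times Y}$ coincides with $\cH_{K_1,X}\bigotimes\cH_{K_2,Y}$: this holds because $\cH_{K,X\times Y}$ is by definition the closed span of $\{K_{(x_j,y_k)}\} = \{\varphi(K_1(\cdot,x_j),K_2(\cdot,y_k))\}$, which is precisely the generating set of the Hilbert tensor product of the factor subspaces. I expect the main obstacle to be the operator-theoretic step for the determining property: making the Kronecker factorization rigorous on the reindexed space $\elltwo(\N\times\N)$ and justifying invertibility of $A_{K_1,X}\otimes A_{K_2,Y}$ on the full (completed) tensor product rather than merely on the algebraic tensor product of the two sequence spaces.
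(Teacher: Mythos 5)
Your proposal is correct, and it fills in details the paper leaves implicit; the main divergence is in how the determining property of $X\times Y$ is established. The paper's own proof is very short: it cites \cite{AEI2025} for $K \in \pd(\R^{d_1+d_2})$, identifies $\cH_K$ with $\cH_{K_1}\bigotimes\cH_{K_2}$ via $\varphi(f,g)=fg$, and then invokes Corollary~\ref{col:rieszbases} together with Lemma~\ref{lemma:rieszbasistensorproduct} to conclude that the two tensorized families are dual Riesz bases of $\cH_{K,X\times Y}$ --- exactly the skeleton of your third paragraph. In the paper, the determining claim is then only implicit: it follows from the equivalence remarked after Definition~\ref{def:suitable} (resting on \cite[Theorem~3.6.6]{Christensen2016}) that a sequence is determining precisely when the associated kernel translates form a Riesz basis, since the Gram matrix of $\bigl(K_1(\cdot,x_j)K_2(\cdot,y_k)\bigr)_{j,k}$ is exactly $A_{K,X\times Y}$. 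You instead verify Definition~\ref{def:suitable} directly via the Kronecker factorization $A_{K,X\times Y}=A_{K_1,X}\otimes A_{K_2,Y}$ on $\ell_2(\N\times\N)\cong\elltwo\otimes\elltwo$ and the fact that the tensor product of bounded bijective operators is bounded and bijective. That is sound and in some respects preferable: it checks the definition as literally stated, and it hands you $(A_{K,X\times Y})^{-1}=A_{K_1,X}^{-1}\otimes A_{K_2,Y}^{-1}$ explicitly, which is precisely the complexity-reduction point the paper advertises; your explicit biorthogonality computation via~\eqref{eq:eigenschaftbilinear} likewise makes the ``dual'' part of the claim rigorous where the paper only gestures. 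Finally, the obstacle you flag at the end is not a real one: $\|A_1\otimes A_2\|=\|A_1\|\,\|A_2\|$ on the completed tensor product, and $A_1^{-1}\otimes A_2^{-1}$ is a bounded two-sided inverse on the algebraic tensors, hence on the completion by density and continuity; alternatively, factor $A_1\otimes A_2=(A_1\otimes I)(I\otimes A_2)$ and invert each factor separately.
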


\begin{proof}
The positive definiteness of $K$ is proven in \cite{AEI2025}, 
and the Hilbert product space of the native spaces of $K_1$ and $K_2$ yields the native space of $K$. 
Since $X$ and $Y$ are determining for $\cH_{K_1}$ and $\cH_{K_2}$, respectively,
we can consider their dual Riesz bases as stated in Corollary~\ref{col:rieszbases}. 
Then, we can compute Riesz bases for $\cH_{K,X \times Y}$ by the Hilbert tensor product of the 
Riesz bases for $\cH_{K_1,X}$ and $\cH_{K_2,Y}$. 
By letting $\varphi(f,g)(x)=f(x)g(x)$ in Definition~\ref{defi:tp:hilbert}, we obtain the stated result. 
\end{proof}

\section{Conclusion and Future Work}
We have introduced determining sequences of irregular samples
to obtain sampling formulas for high-dimensional functions from
reproducing kernel Hilbert spaces (RKHS). 
Our construction relies on standard tools from sampling theory,
where, in particular, we combine approximation methods with 
basic results from Riesz theory. 
We have extended our results to sampling formulas for tensor RKHS,
where we rely on our recent results on tensor product kernels~\cite{AEI2025}.
This provides a concept for efficient representations of multivariate functions  
in high dimensions.
Yet, it remains to elaborate suitable characterisations for the construction of determining samples.


\end{document}